

\documentclass[11pt]{amsart}
\def\isdraft{0}

\usepackage{txfonts}
\usepackage{graphicx}
\usepackage{amsaddr} 
\usepackage{mathtools}
\usepackage{amsmath,amssymb,amsfonts,dsfont}
\usepackage{prettyref} 
\usepackage[utf8]{inputenc}
\usepackage{enumitem}
\usepackage[hyphens]{url}
\usepackage{tikz-cd}
\usepackage{tikz}
\usetikzlibrary{positioning}
\usetikzlibrary{arrows}
\usepackage{bussproofs}
\usepackage[mathscr]{euscript}
\usepackage{hyperref} 
\usepackage{amsthm}
\usepackage{theapa}
\usepackage{a4wide}
\usepackage[color=black,textcolor=white\if\isdraft0,disable\fi]{todonotes}
\usepackage{stackengine}
\usepackage{stmaryrd}
\usepackage{wrapfig}
\usepackage{float}
\graphicspath{{fig/}}

\newtheorem{theorem}{Theorem}

\newtheorem{proposition}[theorem]{Proposition}

\theoremstyle{definition} 
\newtheorem{definition}[theorem]{Definition}
\newtheorem{notation}[theorem]{Notation}

\newtheorem{example}[theorem]{Example}

\newrefformat{cha}{Chapter \ref{#1}}
\newrefformat{sec}{Section \ref{#1}}
\newrefformat{tab}{Table \ref{#1}}
\newrefformat{fig}{Figure \ref{#1}}
\newrefformat{equ}{(\ref{#1})}
\newrefformat{app}{Appendix \ref{#1}}
\newrefformat{thm}{Theorem \ref{#1}}
\newrefformat{cor}{Corollary \ref{#1}}
\newrefformat{prop}{Proposition \ref{#1}}
\newrefformat{lem}{Lemma \ref{#1}}
\newrefformat{fact}{Fact \ref{#1}}
\newrefformat{obs}{Observation \ref{#1}}
\newrefformat{note}{Note \ref{#1}}
\newrefformat{idea}{Idea \ref{#1}}
\newrefformat{trivia}{Trivia \ref{#1}}
\newrefformat{def}{Definition \ref{#1}}
\newrefformat{not}{Notation \ref{#1}}
\newrefformat{con}{Convention \ref{#1}}
\newrefformat{rem}{Remark \ref{#1}}
\newrefformat{exa}{Example \ref{#1}}
\newrefformat{problem}{Problem \ref{#1}}
\newrefformat{claim}{Claim \ref{#1}}
\newrefformat{conjecture}{Conjecture \ref{#1}}
\newrefformat{exe}{Exercise \ref{#1}}
\newrefformat{alg}{Algorithm \ref{#1}}
\newrefformat{err}{Error \ref{#1}}
\newrefformat{que}{Question \ref{#1}}
\newrefformat{ite}{Item \ref{#1}}
\newrefformat{Q}{Q. \ref{#1}}
\newrefformat{warning}{Warning \ref{#1}}
\newrefformat{pseudocode}{Pseudocode \ref{#1}}

\title{
    On syntactically similar logic programs\\ and\\ sequential decompositions
}
\author{
    Christian Anti\'c
}
\address{
    christian.antic@icloud.com\\
    Vienna, Austria
}

\begin{document}
\begin{abstract} 
    Rule-based reasoning is an essential part of human intelligence prominently formalized in artificial intelligence research via logic programs. Describing complex objects as the composition of elementary ones is a common strategy in computer science and science in general. The author has recently introduced the sequential composition of logic programs in the context of logic-based analogical reasoning and learning in logic programming. Motivated by these applications, in this paper we construct a qualitative and algebraic notion of syntactic logic program similarity from sequential decompositions of programs. We then show how similarity can be used to answer queries across different domains via a one-step reduction. In a broader sense, this paper is a further step towards an algebraic theory of logic programming.
\end{abstract}
\maketitle

\section{Introduction}

Rule-based reasoning is an essential part of human intelligence prominently formalized in artificial intelligence research via logic programs with important applications to expert systems, database theory, and knowledge representation and reasoning  \cite<cf.>{Apt90,Lloyd87,Baral03}. Describing complex objects as the composition of elementary ones is a common strategy in computer science and science in general. \citeA{Antic23-23} has recently introduced the sequential composition of logic programs in the context of logic-based analogical reasoning and learning in logic programming. The sequential composition operation has been studied by \citeA{Antic21-1} in the propositional case and by \citeA{Antic21-2} in the non-monotonic case of answer set programming. \citeA{OKeefe85} is the first to study the composition of logic programs and in \citeA[§Related work]{Antic21-1} the author has compared O'Keefe's composition and the related composition of \citeA{Bugliesi94} and \citeA{Brogi92} to sequential composition. Other notable works dealing with composition are, for example, \citeA{Dong90,Dong95}, \citeA{Plambeck90a,Plambeck90a}, and \citeA{Ioannidis91}.

Motivated by the aforementioned applications to analogical reasoning, in this paper we study the sequential decomposition of logic programs and show that it gives rise to a qualitative and algebraic notion of syntactic logic program similarity. More precisely, we say that a program $P$ can be \textit{one-step reduced} to a program $R$---in symbols, $P\lesssim R$---iff $P=(Q\circ R)\circ S$ for some programs $Q$ (prefix) and $S$ (suffix). Now given a query $q$ to $P$, we can answer $q$ by translating every SLD-resolution step of $P$ via the prefix $Q$ into an SLD-resolution step of $R$ and translating the result back again with the suffix $S$ (cf. \prettyref{exa:Plus}). We then say that $P$ and $R$ are \textit{syntactically similar} iff $P$ can be one-step reduced to $R$ and vice versa, that is, iff $P\lesssim R$ and $R\lesssim P$. This definition has appealing mathematical properties. For example, we can show that the programs $Plus$ for the addition of numerals and the program $Append$ for list concatenation are syntactically similar according to our definition (cf. \prettyref{exa:Plus}). Interestingly, when translating queries to $Append$ into queries of the seemingly simpler program $Plus$, we obtain `entangled' terms of the form $s([\;\,])$ and $s([b,c])$ which are neither numerals nor lists, and we believe that `entangled' syntactic objects like these are essential for cross-domain analogical reasoning (see the discussion in \prettyref{sec:Conclusion}). Syntactic similarity is related to the number of bound variables per rule of a program and we deduce from this fact that, for example, the program $Member$ for checking list membership, containing maximally two bound variables per rule, can be one-step reduced to the program $Append$ containing maximally three bound variables per rule, but not vice versa: $Member<Append$ (cf. Examples \ref{exa:Member} and \ref{exa:Member2}). This is interesting as the prefix $Q$ witnessing $Member=Q\circ Append$ resembles the mapping between $Member$ and $Append$ of \citeA{Tausend91} introduced for analogical reasoning.

In a broader sense, this paper is a further step towards an algebraic theory of logic programming.

\section{Logic Programs}

We recall the syntax and semantics of logic programs by mainly following the lines of \citeA{Apt90}.

\subsection{Syntax}\label{sec:Syntax}

An (\textit{unranked first-order}) \textit{language} $L$ is defined as usual from predicate, function, and constant symbols and variables. Terms and atoms over $L$ are defined in the usual way. We denote the set of all ground $L$-atoms by $HB_L$ or simply by $HB$ called the \textit{Herbrand base} over $L$, and we denote the set of all $L$-atoms not containing function or constant symbols by $XHB_L$. Substitutions and (most general) unifiers of terms and (sets of) atoms are defined as usual. We call any bijective substitution mapping variables to variables a \textit{renaming}.

Let $L$ be a language. A (\textit{Horn logic}) \textit{program} over $L$ is a set of \textit{rules} of the form
\begin{align}\label{equ:r} 
    A_0\leftarrow A_1,\ldots,A_k,\quad k\geq 0,
\end{align} where $A_0,\ldots,A_k$ are $L$-atoms. It will be convenient to define, for a rule $r$ of the form \prettyref{equ:r}, $h(r):=\{A_0\}$ and $b(r):=\{A_1,\ldots,A_k\}$, extended to programs by $h(P):=\bigcup_{r\in P}h(r)$ and $b(P):=\bigcup_{r\in P}b(r)$. In this case, the \textit{size} of $r$ is $k$ denoted by $sz(r)$. A \textit{fact} is a rule with empty body and a \textit{proper rule} is a rule which is not a fact. We denote the facts and proper rules in $P$ by $facts(P)$ and $proper(P)$, respectively. A program $P$ is \textit{ground} if it contains no variables and we denote the grounding of $P$ which contains all ground instances of the rules in $P$ by $gnd(P)$. We call a ground program \textit{propositional} if it contains only propositional atoms with no arguments. The set of all \textit{variants} of $P$ is defined by $variants(P):=\bigcup_{\theta\text{ renaming}}P[\theta]$. The variants of a program will be needed to avoid clashes of variables in the definition of composition. A variable is \textit{bound} in a rule if it appears in the head and body. Define the \textit{dual} of $P$ by
\begin{align*} 
    P^d:=facts(P)\cup\{A\leftarrow h(r)\mid r\in proper(P): A\in b(r)\}.
\end{align*} Roughly, we obtain the dual of a theory by reversing all the arrows of its proper rules.


\subsection{Semantics}

An \textit{interpretation} is any set of ground atoms. We define the \textit{entailment relation}, for every interpretation $I$, inductively as follows: (i) for an atom $a$, $I\models a$ if $a\in I$; (ii) for a set of atoms $B$, $I\models B$ if $B\subseteq I$; (iii) for a rule $r$ of the form \prettyref{equ:r}, $I\models r$ if $I\models b(r)$ implies $I\models h(r)$; and, finally, (iv) for a program $P$, $I\models P$ if $I\models r$ holds for each rule $r\in P$. In case $I\models P$, we call $I$ a \textit{model} of $P$. The set of all models of $P$ has a least element with respect to set inclusion called the \textit{least model} of $P$. We say that $P$ and $R$ are \textit{logically equivalent} if their least models coincide. Define the \textit{van Emden-Kowalski operator} of $P$, for every interpretation $I$, by
\begin{align*} 
    T_P(I):=\{h(r)\mid r\in gnd(P):I\models b(r)\}.
\end{align*} It is well-known that an interpretation $I$ is a model of $P$ iff $I$ is a prefixed point of $T_P$ and the least model of $P$ coincides with the least fixed point of $T_P$.

We define the \textit{left} and \textit{right reduct} of $P$, with respect to some interpretation $I$, respectively by
\begin{align*} 
    ^IP:=\{r\in P\mid I\models h(r)\} \quad\text{and}\quad P^I:=\{r\in P\mid I\models b(r)\}.
\end{align*}





\subsection{SLD-Resolution}

Logic programs compute via a restricted form of resolution, called \textit{SLD-resolution}, as follows. For simplicity, we consider here only the ground case. Let $q$ be a ground query $\leftarrow A_1,\ldots,A_k$, $k\geq 1$, and suppose that for some $i$, $1\leq i\leq k$ and $m\geq 0$, $r=A_i\leftarrow A'_1,\ldots,A'_m$ is a rule from $gnd(P)$. Then $q'$ given by
\begin{align*} \leftarrow A_1,\ldots,A_{i-1},A'_1,\ldots,A'_m,A_{i+1},\ldots,A_k
\end{align*} is called a \textit{resolvent} of $q$ and $r$, and $A_i$ is called the \textit{selected atom} of $q$. By iterating this process we obtain a sequence of resolvents which is called an \textit{SLD-derivation}. A derivation can be finite or infinite. If its last query is empty then we speak of an \textit{SLD-refutation} of the original query $q$. In this case we have derived an \textit{SLD-proof} of $A_1,\ldots,A_k$. A \textit{failed} SLD-derivation is a finite derivation which is not a refutation. In case $A$ is a ground atom with an SLD-proof from $P$, we say that $A$ is an \textit{SLD-consequence} of $P$ and write $P\vdash A$. 
For a rule $r$ of the form \prettyref{equ:r}, we write $P\vdash r$ in case $P\vdash A_0$ whenever $P\vdash A_i$ holds for every $1\leq i\leq k$. We denote the \textit{empty query} by $\square$.

\section{Composition}

In this section, we recall the sequential composition of logic programs as defined by \citeA{Antic23-23} and studied in the propositional case by \citeA{Antic21-1}.

\begin{notation} In the rest of the paper, $P$ and $R$ denote logic programs over some joint language $L$.
\end{notation}

The rule-like structure of logic programs induces naturally a compositional structure as follows \cite[Definition 4]{Antic23-23}.

We define the (\textit{sequential}) \textit{composition} of $P$ and $R$ by\footnote{We write $X\subseteq_k Y$ in case $X$ is a subset of $Y$ consisting of $k$ elements.}
\begin{align*} 
    P\circ R:=\left\{h(r\vartheta)\leftarrow b(S\vartheta) \;\middle|\; 
    \begin{array}{l}
        r\in P\\
        S\subseteq_{sz(r)}variants(R)\\
        h(S\vartheta)=b(r\vartheta)\\
        \vartheta=mgu(b(r),h(S))
    \end{array}
    \right\}.
\end{align*}
\todo[inline]{}

Roughly, we obtain the composition of $P$ and $R$ by resolving all body atoms in $P$ with the `matching' rule heads of $R$. This is illustrated in the next example, where we construct the even from the natural numbers via composition.

\begin{example}\label{exa:Even} Consider the program
\begin{align*} Nat:= \left\{
\begin{array}{l}
    nat(0)\\
    nat(s(x))\leftarrow nat(x)
\end{array}
\right\}
\end{align*} generating the natural numbers. By composing the only proper rule in $Nat$ with itself, we obtain
\begin{align*} \{nat(&s(x))\leftarrow nat(x)\}\circ\{nat(s(x))\leftarrow nat(x)\}=\{nat(s(s(x)))\leftarrow nat(x)\}.
\end{align*} Notice that this program, together with the single fact in $Nat$, generates the \textit{even} numbers. 
\end{example}

Notice that we can reformulate sequential composition as
\begin{align}\label{equ:bigcup} P\circ R=\bigcup_{r\in P}(\{r\}\circ R),
\end{align} which directly implies right-distributivity of composition, that is,
\begin{align}\label{equ:(P_cup_Q)_circ_R} (P\cup Q)\circ R=(P\circ R)\cup (Q\circ R)\quad\text{holds for all propositional Horn theories }P,Q,R.
\end{align} However, the following counter-example shows that left-distributivity fails in general:  
\begin{align*} \{a\leftarrow b,c\}\circ(\{b\}\cup\{c\})=\{a\} \quad\text{and}\quad (\{a\leftarrow b,c\}\circ\{b\})\cup(\{a\leftarrow b,c\}\circ\{c\})=\emptyset.
\end{align*}

Define the \textit{unit program} by the Krom program\footnote{Recall from \prettyref{sec:Syntax} that $XHB_L$ consists of all $L$-atoms not containing function or constant symbols.}
\begin{align*} 
    1_L:=\{A\leftarrow A\mid A\in XHB_L\}.
\end{align*} In the sequel, we will omit the reference to $L$.

The space of all logic programs over some fixed language is closed under sequential composition with the neutral element given by the unit program and the empty program serves as a left zero, that is, we have
\begin{align*} 
    P\circ 1=1\circ P=1 \quad\text{and}\quad \emptyset\circ P=\emptyset.
\end{align*}

We can simulate the van Emden-Kowalski operator on a syntactic level without any explicit reference to operators via sequential composition, that is, for any interpretation $I$, we have
\begin{align}\label{equ:T_P} T_P(I)=gnd(P)\circ I.
\end{align}

As facts are preserved by composition and since we cannot add body atoms to facts via composition on the right, we have
\begin{align}\label{equ:IP=I} I\circ P=I.
\end{align}

\subsection{Ground Programs}

Ground programs are possibly infinite propositional programs where ground atoms can have a fixed inner structure, which means that we can import here the results of \citeA{Antic21-1} which do not depend on finiteness.

\begin{notation} In the rest of this subsection, $P$ and $R$ denote ground programs.
\end{notation}

For ground programs, sequential composition simplifies to
\begin{align*} 
    P\circ R=\{h(r)\leftarrow b(S)\mid r\in P,S\subseteq_{sz(r)}R:h(S)=b(r)\}.
\end{align*}

Our first observation is that we can compute the heads and bodies of a ground program $P$ via
\begin{align}\label{equ:h(P)} h(P)=P\circ HB \quad\text{and}\quad b(P)=proper(P)^d\circ HB.
\end{align} Moreover, we have
\begin{align}\label{equ:h(PR)} h(PR)\subseteq h(P) \quad\text{and}\quad b(PR)\subseteq b(R).
\end{align}

Given an interpretation $I$, we define
\begin{align*} I^\ominus:=1^{HB-I}\cup I \quad\text{and}\quad I^\oplus:=\{A\leftarrow(\{A\}\cup I)\mid A\in HB\}.
\end{align*} It is not difficult to show that $PI^\ominus$ is the program $P$ where all occurrences of the ground atoms in $I$ are removed from the rule bodies in $P$, that is, we have
\begin{align*} PI^\ominus&=\{h(r)\leftarrow (b(r)-I)\mid r\in P\}.
\end{align*} Similarly, $PI^\oplus$ is the program $P$ with the ground atoms in $I$ added to the rule bodies of all \textit{proper} rules in $P$, that is, we have
\begin{align*} PI^\oplus&=facts(P)\cup\{h(r)\leftarrow (b(r)\cup I)\mid r\in proper(P)\}.
\end{align*}

The left and right reducts can be represented via composition and the unit program by
\begin{align}\label{equ:^IP} ^IP=1^I\circ P \quad\text{and}\quad P^I=P\circ 1^I.
\end{align}

Interestingly enough, we can represent the grounding of a (non-ground) program $P$ via composition with the grounding of the unit program by
\begin{align}\label{equ:gnd(P)} gnd(P)=(gnd(1)\circ P)\circ gnd(1).
\end{align}

\section{Decomposition and Similarity}

Let us start this section with a definition of a qualitative and algebraic notion of syntactic logic program similarity in terms of sequential decomposition.

Given two logic programs $P$ and $R$, we define
\begin{align*} 
    P\lesssim R\quad:\Leftrightarrow\quad\text{there exist programs $Q$ and $S$ such that $P=(QR)S$.}
\end{align*} In that case, we say that $P$ can be \textit{one-step reduced} to $R$, and we call $Q$ a \textit{prefix} and $S$ a \textit{suffix} of $P$. In case $P\lesssim R$ and $R\lesssim P$, we say that $P$ and $R$ are (\textit{syntactically}) \textit{similar} denoted by $P\approx R$.

\todo[inline]{prove that $\lesssim$ is transitive}

The computational interpretation of $P=(QR)S$ is that we can answer a query to $P$ by translating it via $Q$ into a query of $R$, computing one SLD-resolution step with $R$, and translating it back to $P$ via $S$ as is demonstrated in the next example.


\begin{example}\label{exa:Plus} Consider the programs
\begin{align*} Plus:= \left\{
\begin{array}{l}
    plus(0,y,y)\\
    plus(s(x),y,s(z))\leftarrow\\
    \qquad plus(x,y,z)
\end{array}
\right\} \quad\text{and}\quad 
Append := \left\{
\begin{array}{l}
    append([\;\,],y,y)\\
    append([u\mid x],y,[u\mid z])\leftarrow\\
    \qquad plus(x,y,z)
\end{array}
\right\}
\end{align*} implementing the addition of positive numbers (represented as numerals) and the concatenation of lists, respectively. To show that $Plus$ and $Append$ are syntactically similar, we define the programs
\begin{align*} Q:= \left\{
\begin{array}{l}
  append([\;\,],y,y)\leftarrow plus(0,y,y)\\
  append([u\mid x],y,[v\mid z])\leftarrow plus(s(x),y,s(z))
\end{array}
\right\}
\end{align*} and
\begin{align*} S:=\{plus(x,y,z)\leftarrow append(x,y,z)\},
\end{align*} and compute
\begin{align*} Append=(Q\circ Plus)\circ S \quad\text{and}\quad Plus=(Q^d\circ Append)\circ S^d.
\end{align*} This shows:
\begin{align*} Plus\approx Append.
\end{align*}

The following SLD-derivation demonstrates how we can use the prefix $Q$ and suffix $S$ to append two lists via the seemingly simpler program for the addition of numerals:
\begin{align*}
&\xleftarrow{\;\;\;?\;\;\;} append([a],[b,c],[a,b,c])\\
&\xleftarrow{\;\;Q\;\;\;} plus(s([\;\,]),[b,c],s([b,c]))\\
&\xleftarrow{Plus} plus([\;\,],[b,c],[b,c])\\
&\xleftarrow{\;\;S\;\;\;} append([\;\,],[b,c],[b,c])\\
&\xleftarrow{\;\;Q\;\;\;} plus(0,[b,c],[b,c])\\
&\xleftarrow{Plus} \square.
\end{align*} This shows, via a translation to numerals,
\begin{align*} Append\vdash append([a],[b,c],[a,b,c]).
\end{align*} Interestingly, the SLD-derivation above contains the `entangled' terms $s([\;\,])$ and $s([a,b])$ which are neither numerals nor lists, and we believe that such `entangled' syntactic objects---which under the conventional doctrine of programming are not well-typed---are characteristic for reasoning across different domains and deserve special attention (this is discussed in \prettyref{sec:Conclusion}).
\end{example}

\begin{example}\label{exa:Member} \citeA{Tausend91} derive a mapping between the program $Append$ from \prettyref{exa:Plus} above and the program
\begin{align*} Member:= \left\{
\begin{array}{l}
    member(u,[u\mid x]),\\
    member(u,[v\mid x])\leftarrow\\
    \qquad member(u,x)
\end{array}
\right\},
\end{align*} which computes list membership. We can now ask---by analogy to \citeA{Tausend91}---whether $Member$ can be one-step reduced to $Append$ (and vice versa) according to our definition. Define the programs
\begin{align*} Q:= \left\{
\begin{array}{l}
    member(u,[u\mid x]),\\
    member(u,[v\mid x])\leftarrow append([v\mid x],u,[v\mid x])
\end{array}
\right\}
\end{align*} and
\begin{align*} S:=\{append(x,y,z)\leftarrow member(y,x)\}.
\end{align*} It is not hard to compute
\begin{align*} Member=(Q\circ Append)\circ S,
\end{align*} which shows that membership can indeed be one-step reduced to the program for appending lists:
\begin{align*} Member\lesssim Append.
\end{align*} Interestingly enough, the programs $Q$ and $S$, which in combination permute the first two arguments of $Append$ and `forget' about the last one, resemble the mapping computed in \citeA{Tausend91} by other means. The converse fails, roughly, since $Member$ does not contain enough `syntactic structure' to represent $Append$. The intuitive reason is that $Member$ has only two bound variables, whereas $Append$ has three (see \prettyref{exa:Member2}).
\end{example}

\section{Properties of Similarity}

Given ground atoms $A_0,\ldots,A_k$, the identity
\begin{align*} \{A_0\}=\{A_0\leftarrow A_1\}\{A_1\}
\end{align*} shows
\begin{align*} \{A_0\}\lesssim \{A_0\leftarrow A_1\}.
\end{align*} Since we cannot add body atoms to facts via composition on the right, we have
\begin{align*} \{A_0\leftarrow A_1\}\not\lesssim\{A_0\}.
\end{align*} Hence,
\begin{align*} \{A_0\}<\{A_0\leftarrow A_1\}.
\end{align*} On the other hand, we have
\begin{align*} 
    \{A_0\leftarrow A_1\}&=\{A_0\leftarrow A_1,\ldots,A_k\}\{A_2,\ldots,A_k\}^\ominus,\\
    \{A_0\leftarrow A_1,\ldots,A_k\}&=\{A_0\leftarrow A_1\}\{A_2,\ldots,A_k\}^\oplus,
\end{align*} which shows
\begin{align*} \{A_0\leftarrow A_1\}\approx\{A_0\leftarrow A_1,\ldots,A_k\}.
\end{align*}

In the non-ground case, we have, for example:
\begin{align*} \{p\leftarrow p\}<\{p(x_1)\leftarrow p(x_1)\}<\{p(x_1,x_2)\leftarrow p(x_1,x_2)\}<\ldots.
\end{align*} This motivates the following definition.

\begin{definition} The \textit{width} of a rule is given by the number of its bound variables, extended to programs via $width(P):=\max_{r\in P}width(r)$.
\end{definition}

The number of bound variables cannot increase via composition---for example:
\begin{align*} (\{p(x,y)\leftarrow p(x)\}\circ\{p(x)\leftarrow p(x)\})\circ\{p(x)\leftarrow p(x,y)\}=\{p(x,y)\leftarrow p(x,z)\}.
\end{align*} This entails
\begin{align*} width(PR)\leq width(P) \quad\text{and}\quad width(PR)\leq width(R).
\end{align*} Hence, we have
\begin{align}\label{equ:width} P\lesssim R \quad\Rightarrow\quad width(P)\leq width(R).
\end{align}

\begin{example}\label{exa:Member2} Reconsider the programs $Append$ and $Member$ of \prettyref{exa:Member}. We have
\begin{align*} width(Member)=2 \quad\text{whereas}\quad width(Append)=3.
\end{align*} By \prettyref{equ:width} we thus have $Append\not\lesssim Member$.
\end{example}

The following propositions summarize some facts about syntactic similarity.

\begin{proposition} For any program $P$ and interpretations $I$ and $J$, we have
\begin{align} 
    gnd(P)&\lesssim P \quad\text{and}\quad T_P(I)\lesssim gnd(P),\\
    I&\lesssim P \quad\text{and}\quad P\cup I\lesssim P,\\
    \label{equ:I_approx_J} I&\approx J.
\end{align} Moreover, we have
\begin{align*} 
    P\approx I \quad&\Leftrightarrow\quad \text{$P$ is an interpretation}.
\end{align*}
\end{proposition}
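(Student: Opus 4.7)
The plan is to establish each of the four $\lesssim$-statements by exhibiting explicit prefix--suffix witnesses, and then to read off the equivalence and the biconditional from them.

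Each of the four inclusions follows by inspection from an identity already at our disposal. For $gnd(P)\lesssim P$, the decomposition is literally (\ref{equ:gnd(P)}), with prefix and suffix both equal to $gnd(1)$. For $T_P(I)\lesssim gnd(P)$, equation (\ref{equ:T_P}) rewritten as $T_P(I)=(1\circ gnd(P))\circ I$ gives the unit as prefix and $I$ as suffix. For $I\lesssim P$, two applications of (\ref{equ:IP=I}) yield $(I\circ P)\circ I=I\circ I=I$, so $Q=S=I$ works. For $P\cup I\lesssim P$, right-distributivity of composition over union (immediate from (\ref{equ:bigcup})) together with the unit law gives $((1\cup I)\circ P)\circ 1=(1\circ P)\cup(I\circ P)=P\cup I$, so $Q=1\cup I$ and $S=1$ serve.

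The equivalence $I\approx J$ then follows by applying the $I\lesssim P$ clause with $P:=J$ and again with the roles of $I$ and $J$ swapped.

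For the biconditional, the $(\Leftarrow)$ direction is precisely (\ref{equ:I_approx_J}). The only point with real content is $(\Rightarrow)$, and I expect this to be the main obstacle. Assume $P\approx I$; in particular $P\lesssim I$, so $P=(Q\circ I)\circ S$ for some $Q,S$. The key observation is that $Q\circ I$ is itself an interpretation: unpacking \prettyref{def:PR}, every inner subset $S'\subseteq_{sz(r)}variants(I)$ consists entirely of facts, so $body(S'\vartheta)=\emptyset$ and every rule produced by the composition has empty body. Writing $I':=Q\circ I$ and invoking (\ref{equ:IP=I}) once more yields $P=I'\circ S=I'$, which is an interpretation as required. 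The crux of the whole proposition is thus the single observation that facts on the right of $\circ$ force facts on the output.
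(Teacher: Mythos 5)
Your proof is correct and follows essentially the same route as the paper: the same witnesses via \prettyref{equ:gnd(P)}, \prettyref{equ:T_P}, \prettyref{equ:IP=I}, and right-distributivity, and the same reduction of the biconditional to showing that $Q\circ I$ is an interpretation before absorbing the suffix with \prettyref{equ:IP=I}. The only cosmetic difference is that you establish that last fact by unfolding \prettyref{def:PR} (all right-hand subsets are facts, so all bodies vanish), whereas the paper cites the identity $Q\circ I=T_Q(I)$; both amount to the same observation.
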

\begin{proof} The relations in the first line are immediate consequences of \prettyref{equ:gnd(P)} and \prettyref{equ:T_P}, respectively. The relation $I\lesssim P$ follows from \prettyref{equ:IP=I}. The computation
\begin{align*} P\cup I\stackrel{\prettyref{equ:IP=I}}=P\cup IP\stackrel{\prettyref{equ:(P_cup_Q)_circ_R}}=(1\cup I)P
\end{align*} shows $P\cup I\lesssim P$. The similarity $I\approx J$ follows from $I\lesssim P$. The last similarity follows from \prettyref{equ:I_approx_J} together with
\begin{align*} P\lesssim I \quad\Leftrightarrow\quad P=(QI)S\stackrel{\prettyref{equ:IP=I}}=QI\stackrel{\prettyref{equ:T_P}}=T_Q(I),\text{ for some $Q,S$} \quad\Rightarrow\quad \text{$P$ is an interpretation}.
\end{align*}
\end{proof}

\begin{proposition} For any \textit{ground} program $P$ and interpretation $I$, we have
\begin{align}
    h(P)&\lesssim P \quad\text{and}\quad b(P)\lesssim proper(P)^d,\\
    facts(P)&\lesssim P \quad\text{and}\quad P\lesssim 1\cup facts(P),\\
    \label{equ:PI^oplus} PI^\oplus&\approx P\quad\text{and}\quad PI^\ominus\lesssim P,\\
    \label{equ:^IP_lesssim_}^IP&\lesssim P \quad\text{and}\quad P^I\lesssim P.
\end{align} Moreover, we have
\begin{align} 
    \label{equ:ominus} P\approx PI^\ominus \quad&\Leftrightarrow\quad facts(P)=facts(PI^\ominus),\\
    \label{equ:facts} P\approx facts(P) \quad&\Leftrightarrow\quad P=facts(P)
\end{align}
\end{proposition}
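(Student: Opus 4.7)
The plan is to dispatch the proposition's ten assertions by reading off each as a decomposition built from the identities of Section 3.

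For the direct reductions in items (a)--(c) and (f)--(h), each is witnessed by an immediate decomposition: $head(P)=(1\circ P)\circ HB$ and $body(P)=(1\circ proper(P)^d)\circ HB$ by \prettyref{equ:head(P)}; $facts(P)=(facts(P)\circ P)\circ 1$ using \prettyref{equ:IP=I} for the interpretation $facts(P)$; ${}^IP=(1^I\circ P)\circ 1$ and $P^I=(1\circ P)\circ 1^I$ by \prettyref{equ:^IP}; and $PI^\ominus=(1\circ P)\circ I^\ominus$, $PI^\oplus=(1\circ P)\circ I^\oplus$ from the syntactic identities $PI^\ominus=P\circ I^\ominus$ and $PI^\oplus=P\circ I^\oplus$ recalled in the preceding subsection. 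For $P\lesssim 1\cup facts(P)$ I combine right-distributivity \prettyref{equ:(P_cup_Q)_circ_R} with \prettyref{equ:IP=I}:
\begin{align*}
(1\cup facts(P))\circ P=(1\circ P)\cup(facts(P)\circ P)=P\cup facts(P)=P,
\end{align*}
so $P=(1\circ(1\cup facts(P)))\circ P$, witnessing the claim with prefix $1$ and suffix $P$.

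The genuine content lies in the reverse similarity $P\lesssim PI^\oplus$ of \prettyref{equ:PI^oplus}; this is the step I expect to be the main obstacle. The plan is to recover $P$ from $PI^\oplus$ using a prefix $Q$ that selects the heads of $P$ and a suffix $S$ that simultaneously absorbs the grafted $I$-atoms as ground facts and passes through the original body atoms. The natural candidate $Q=1$ and $S=I\cup\{A\leftarrow A\mid A\in body(P)\}$, analyzed via right-distributivity, recovers $P$ immediately when $body(P)\cap I=\emptyset$; in the general case the same atom in $S$ must simultaneously be consumed (for rules whose original body does not contain it) and retained (for rules whose original body does), so one must exploit the nondeterministic choice of the size-$sz(r)$ subset in the composition (\prettyref{def:PR}) to separate the two behaviors and rule out spurious resolvents in $PI^\oplus\circ S$.

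The two equivalences reduce to tracking facts through compositions. For \prettyref{equ:ominus}: the $(\Leftarrow)$ direction uses the hypothesis to conclude that no proper $r\in P$ has $body(r)\subseteq I$, which makes a direct computation yield $PI^\ominus\circ I^\oplus=PI^\oplus$; combined with \prettyref{equ:PI^oplus} and the transitivity of $\lesssim$ (immediate from the associativity of $\circ$) this gives $P\lesssim PI^\ominus$. The $(\Rightarrow)$ direction uses that any new fact $f\in facts(PI^\ominus)\setminus facts(P)$---necessarily $head(r)$ for a proper $r\in P$ with $body(r)\subseteq I$---propagates through any decomposition $(Q\circ PI^\ominus)\circ S=P$: the identity $X\circ\emptyset=facts(X)$ (an immediate consequence of \prettyref{def:PR}) together with associativity forces $f\in facts(P)$, contradicting $f\notin facts(P)$. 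For \prettyref{equ:facts}, $(\Leftarrow)$ is trivial and $(\Rightarrow)$ follows from the previous proposition's characterization $P\approx I\Leftrightarrow P\text{ is an interpretation}$, applied with $I=facts(P)$: under $P\approx facts(P)$ the program $P$ must itself be an interpretation and hence coincide with its facts.
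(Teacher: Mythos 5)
Most of the routine items in your proposal are correct and essentially coincide with the paper's proof, up to harmless changes of witness (the paper uses $facts(P)=P\circ\emptyset$ where you use $facts(P)=(facts(P)\circ P)\circ 1$, and suffix $proper(P)$ where you use suffix $P$ for $P\lesssim 1\cup facts(P)$); your derivation of \prettyref{equ:facts} from the preceding proposition's characterization $P\approx I\Leftrightarrow P\text{ is an interpretation}$ is in fact cleaner than the paper's recomputation. The genuine gap is exactly where you predicted it: $P\lesssim PI^\oplus$. You leave this step as a plan, and the plan cannot be completed. In \prettyref{def:PR} \emph{every} admissible subset $S'\subseteq_{sz(r)}S$ contributes a rule to the composition, so there is no mechanism for ``ruling out spurious resolvents'': once $S$ contains both the fact $A$ and the rule $A\leftarrow A$ for some $A\in body(P)\cap I$, the program $PI^\oplus\circ S$ contains $head(r)\leftarrow body(r)-I$ whenever it contains $r$. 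Worse, the claim genuinely needs a disjointness hypothesis: for $P=\{a\leftarrow b,\ d\leftarrow e\}$ and $I=\{b\}$ one has $PI^\oplus=\{a\leftarrow b,\ d\leftarrow b,e\}$, and a short case analysis shows no $Q,S$ satisfy $P=(Q\circ PI^\oplus)\circ S$: every proper rule of $Q\circ PI^\oplus$ has $b$ in its body, and any rule of $S$ with head $b$ must have body contained in $\{b\}$ to reproduce $a\leftarrow b$ and body contained in $\{e\}$ to reproduce $d\leftarrow e$, and having rules of both kinds in $S$ generates rules outside $P$. You should know that the paper's own one-line justification, the identity $P=(PI^\oplus)I^\ominus$, is no better: in general $(PI^\oplus)I^\ominus=PI^\ominus$, which equals $P$ only when no proper rule of $P$ has a body atom in $I$. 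So your instinct that this is the hard case is sound, but neither your sketch nor the paper closes it; the honest statement requires $body(proper(P))\cap I=\emptyset$.

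Two further points. For \prettyref{equ:ominus}$(\Leftarrow)$ your route---$(PI^\ominus)I^\oplus=PI^\oplus$ under the hypothesis, then chain through \prettyref{equ:PI^oplus} by transitivity---is the same as the paper's and therefore inherits the gap above; note also that the hypothesis $facts(P)=facts(PI^\ominus)$ only excludes proper rules with $body(r)\subseteq I$, not proper rules whose bodies merely meet $I$, so it does not rescue the $P\lesssim PI^\oplus$ link. For \prettyref{equ:ominus}$(\Rightarrow)$ your propagation argument is too vague to check: from $P=(Q\circ PI^\ominus)\circ S$ and $X\circ\emptyset=facts(X)$ you obtain $facts(P)=(Q\circ PI^\ominus)\circ facts(S)$, but a new fact $f\in facts(PI^\ominus)-facts(P)$ simply need not appear in $Q\circ PI^\ominus$ unless $Q$ resolves against it, so nothing in this computation ``forces $f\in facts(P)$''. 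The paper offers only ``the other direction is analogous'' here, so you are not alone, but as written this direction is not proved in your proposal either.
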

\begin{proof} The relations in the first line are immediate consequences of \prettyref{equ:h(P)}. The relation $facts(P)\lesssim P$ follows from $facts(P)=P\circ\emptyset$, and the relation $P\lesssim 1\cup facts(P)$ follows from
\begin{align*} 
    (1\cup facts(P))proper(P)&\stackrel{\prettyref{equ:(P_cup_Q)_circ_R}}=proper(P)\cup facts(P)proper(P)\\
    &\stackrel{\prettyref{equ:IP=I}}=proper(P)\cup facts(P)\\
    &=P.
\end{align*} The relation $PI^\oplus\lesssim P$ holds trivially; $P\lesssim PI^\oplus$ follows from $P=(PI^\oplus)I^\ominus$. Similarly, the relation $PI^\ominus\lesssim P$ holds trivially; for $P\lesssim PI^\oplus$ see \prettyref{equ:ominus}. The relations in \prettyref{equ:^IP_lesssim_} are immediate consequences of \prettyref{equ:^IP}.

Next, we prove \prettyref{equ:ominus}: $facts(P)=facts(PI^\ominus)$ means that by removing the ground atoms in $I$ from all rule bodies in $P$ we do not obtain novel facts---hence, we can add the ground atoms from $I$ back to the rule bodies of $P$ via $P=(PI^\ominus)I^\oplus$ which shows $P\lesssim PI^\ominus$ and see \prettyref{equ:PI^oplus} (recall that we cannot add body atoms to facts via composition); the other direction is analogous. 

Finally, the equivalence in \prettyref{equ:facts} is shown as follows: we have $P\lesssim facts(P)$ iff 
\begin{align}\label{equ:Q_circ_facts(P)} P=(Q\circ facts(P))\circ S=Q\circ facts(P),\quad\text{for some $Q$ and $S$,}
\end{align} since
\begin{align*} Q\circ facts(P)\stackrel{\prettyref{equ:T_P}}=T_Q(facts(P))
\end{align*} is an interpretation and
\begin{align*} T_Q(facts(P))\circ S\stackrel{\prettyref{equ:IP=I}}=T_Q(facts(P)).
\end{align*} The identity \prettyref{equ:Q_circ_facts(P)} holds iff $P=T_Q(facts(P))$ which is equivalent to $P=facts(P)$ since $T_Q(facts(P))$ yields an interpretation.
\end{proof}

The following characterization of syntactic similarity follows immediately from the definitions.

\begin{proposition}\label{prop:lesssim} For any programs $P$ and $R$, we have $P\lesssim R$ iff for each rule $r\in P$ there is a rule $s_r$, a subset $R_r$ of $R$ with $width(r)\leq width(R_r)$, and a program $S_r$ such that
\begin{align*} \{r\}=(\{s_r\}R_r)S_r \quad\text{and}\quad (\{s_r\}R)S\subseteq P,\quad\text{where $S:=\bigcup_{r\in P}S_r$.}
\end{align*} In this case, we have $P=(QR)S$ with $Q:=\bigcup_{r\in P}\{s_r\}$.
\end{proposition}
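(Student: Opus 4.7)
The plan is to prove both directions by unpacking the two-layered composition in $(QR)S$ and using right-distributivity \prettyref{equ:bigcup} together with the width bound \prettyref{equ:width} as the main tools.

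For the forward direction, suppose $P=(QR)S$ for some $Q$ and $S$. Given any rule $r\in P$, I would peel off the outer composition first: by \prettyref{def:PR}, $r$ has the form $head(t\vartheta)\leftarrow body(T\vartheta)$ for some $t\in QR$, some $T\subseteq_{sz(t)}variants(S)$, and $\vartheta=mgu(body(t),head(T))$. Peeling off the inner composition next, $t=head(s\mu)\leftarrow body(R'\mu)$ for some $s\in Q$, $R'\subseteq_{sz(s)}variants(R)$, and $\mu=mgu(body(s),head(R'))$. Then setting $s_r:=s$, taking $R_r$ to be the corresponding subset of $R$ underlying $R'$, and $S_r:=T$ yields $\{r\}=(\{s_r\}R_r)S_r$ (the specific choice of variants and matching pinpoints exactly the single resolvent $r$). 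The inclusion $(\{s_r\}R)S\subseteq P$ then follows from monotonicity of $\circ$ in both arguments combined with right-distributivity \prettyref{equ:(P_cup_Q)_circ_R}: since $\{s_r\}\subseteq Q$, we get $\{s_r\}R\subseteq QR$, whence $(\{s_r\}R)S\subseteq(QR)S=P$. The width bound $width(r)\leq width(R_r)$ is immediate from \prettyref{equ:width} applied to $\{r\}\lesssim R_r$.

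For the backward direction, set $Q:=\bigcup_{r\in P}\{s_r\}$ and $S:=\bigcup_{r\in P}S_r$ as suggested in the statement. Applying \prettyref{equ:bigcup} (and hence \prettyref{equ:(P_cup_Q)_circ_R}) to both layers gives
\begin{align*}
(QR)S=\Bigl(\bigcup_{r\in P}\{s_r\}R\Bigr)S=\bigcup_{r\in P}(\{s_r\}R)S.
\end{align*}
The inclusion $(QR)S\subseteq P$ is then immediate from the hypothesis $(\{s_r\}R)S\subseteq P$ for each $r$. Conversely, for any $r\in P$, the hypothesis $\{r\}=(\{s_r\}R_r)S_r$ together with monotonicity of $\circ$ gives $\{r\}\subseteq(\{s_r\}R)S_r\subseteq(\{s_r\}R)S\subseteq(QR)S$, yielding $P\subseteq(QR)S$ and hence $P=(QR)S$, i.e.\ $P\lesssim R$.

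The main obstacle I anticipate is the fine-grained bookkeeping in the forward direction: one must verify that $R_r$ and $S_r$ can be chosen so that $(\{s_r\}R_r)S_r$ yields \emph{exactly} the singleton $\{r\}$ and not a strictly larger composition, which requires fixing the renamings drawn from $variants(R)$ and $variants(S)$ and the specific atom-matching that produces $r$. A secondary technical point is the monotonicity of $\circ$ in both arguments, which is not stated explicitly in the excerpt but follows routinely from the set-based form of \prettyref{def:PR} and \prettyref{equ:bigcup}. Everything else is bookkeeping on top of \prettyref{equ:width}, \prettyref{equ:bigcup}, and \prettyref{equ:(P_cup_Q)_circ_R}.
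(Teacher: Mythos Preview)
Your proposal is correct and takes essentially the same approach as the paper: the paper gives no proof beyond asserting that the characterization ``follows immediately from \prettyref{def:lesssim}'', so your unpacking of the two-layer composition via \prettyref{equ:bigcup}, right-distributivity \prettyref{equ:(P_cup_Q)_circ_R}, and monotonicity is precisely the argument the paper leaves to the reader. One small point of hygiene: in your forward direction you write $(\{s_r\}R)S\subseteq(QR)S=P$ with $S$ denoting the \emph{given} suffix, whereas the statement asks for the inclusion with $S:=\bigcup_{r\in P}S_r$; since each $S_r$ is (up to renaming) drawn from the given suffix, the newly formed $S$ is contained in it and right-monotonicity closes the gap, but you should make that step explicit.
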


\begin{example} Consider the propositional programs
\begin{align*} P=\left\{
\begin{array}{l}
    c\\
    a\leftarrow b,c\\
    b\leftarrow a,c
\end{array}
\right\} \quad\text{and}\quad \pi_{(a\,b)}=\left\{
\begin{array}{l}
    a\leftarrow b\\
    b\leftarrow a
\end{array}
\right\}.
\end{align*} We construct the programs $Q$ and $S$ such that $P=(Q\pi_{(a\,b)})S$ according to \prettyref{prop:lesssim}. Define
\begin{align*} 
    &r_1:=c \quad\Rightarrow\quad s_{r_1}:=c \quad\text{and}\quad S_{r_1}:=\emptyset,\\
    &r_2:=a\leftarrow b,c \quad\Rightarrow\quad s_{r_2}:=a\leftarrow a \quad\text{and}\quad S_{r_2}:=\{b\leftarrow b,c\},\\
    &r_3:=b\leftarrow a,c \quad\Rightarrow\quad s_{r_3}:=b\leftarrow b \quad\text{and}\quad S_{r_3}:=\{a\leftarrow a,c\},
\end{align*} and
\begin{align*} 
    Q:=\{s_{r_1},s_{r_2},s_{r_3}\}=1^{\{a,b\}}\cup\{c\} \quad\text{and}\quad S:=S_{r_1}\cup S_{r_2}\cup S_{r_3}=\{c\}^\oplus-1^{\{c\}}.
\end{align*} This yields
\begin{align*} 
    P=((1^{\{a,b\}}\cup\{c\})\pi_{(a\,b)})(\{c\}^\oplus-1^{\{c\}}).
\end{align*} Similar computations yield
\begin{align*} 
    \pi_{(a\,b)}=(1^{\{a,b\}}P)\{c\}^\ominus.
\end{align*} This shows
\begin{align*} 
    P\approx\pi_{(a\,b)}.
\end{align*}
\end{example}

\begin{example} Consider the propositional programs
\begin{align*} 
    \pi_{(a\,b)}:= \left\{
    \begin{array}{l}
        a\leftarrow b\\
        b\leftarrow a
    \end{array}
    \right\} \quad\text{and}\quad R:= \left\{
    \begin{array}{l}
        a\leftarrow b\\
        b\leftarrow b
    \end{array}
    \right\}.
\end{align*} We have
\begin{align*} 
    R=\pi_{(a\,b)}R \quad\Rightarrow\quad R\lesssim \pi_{(a\,b)}.
\end{align*} On the other hand, there can be no programs $Q$ and $S$ such that $\pi_{(a\,b)}=(QR)S$ since we cannot rewrite the rule body $b$ of $R$ into $a$ and $b$ simultaneously via composition on the right. This shows
\begin{align*} 
    R<\pi_{(a\,b)}.
\end{align*}
\end{example}

The following simple example shows that syntactic similarity and logical equivalence are `orthogonal' concepts.

\begin{example} The empty program is logically equivalent with respect to the least model semantics to the propositional program $a\leftarrow a$ consisting of a single rule. Since we cannot obtain the rule $a\leftarrow a$ from the empty program via composition, logical equivalence does not imply syntactic similarity. For the other direction, the computations
\begin{align*} \left\{
\begin{array}{l}
    a\\
    b\leftarrow a
\end{array}
\right\}= \left\{
\begin{array}{l}
    a\\
    b\leftarrow a,b
\end{array}
\right\}\{b\}^\ominus \quad\text{and}\quad \left\{
\begin{array}{l}
    a\\
    b\leftarrow a,b
\end{array}
\right\}=\left\{
\begin{array}{l}
    a\\
    b\leftarrow a
\end{array}
\right\}\{b\}^\oplus
\end{align*} show that the programs $P=\{a,\;b\leftarrow a\}$ and $R=\{a,\;b\leftarrow a,b\}$ are syntactically similar; however, $P$ and $R$ are not logically equivalent.
\end{example}

\todo[inline]{bsp das zeigt, das programme mit ganz unterschiedlicher semantik syntaktisch aehnlich sein koennen}

\section{Conclusion}\label{sec:Conclusion}

This paper showed how a qualitative and algebraic notion of syntactic logic program similarity can be constructed from the sequential decomposition of programs. We derived some elementary properties of syntactic similarity and, more importantly, demonstrated how it can be used to answer queries across different domains by giving some illustrative examples. Interestingly, in \prettyref{exa:Plus}, in the process of answering queries in the list domain by translating it to the seemingly simpler domain of numerals, we obtained `entangled' terms of the form $s([\;\,])$ and $s([b,c])$ which are neither numerals nor lists. We believe that `entangled' syntactic objects of this form---which under the conventional doctrine of programming are not well-typed (see, for example, the remarks in \cite[§5.8.2]{Apt97})---are characteristic for `creative' analogies across different domains and deserve special attention. More precisely, if we think of (logic) programs as tools for solving problems, then using those tools `creatively' often requires, as in the example mentioned above, a tight coupling between seemingly incompatible objects. However, if those objects are statically typed by the programmer, then this might prevent the creation and exploitation of useful analogical `bridges' for knowledge transfer.

\bibliographystyle{theapa}
\bibliography{/Users/christianantic/Bibdesk/Bibliography,/Users/christianantic/Bibdesk/Preprints,/Users/christianantic/Bibdesk/Publications}
\end{document}